\newcommand{\TeXmacs}{T\kern-.1667em\lower.5ex\hbox{E}\kern-.125emX\kern-.1em\lower.5ex\hbox{\textsc{m\kern-.05ema\kern-.125emc\kern-.05ems}}}
\newcommand{\assign}{:=}
\newcommand{\nin}{\not\in}
\newcommand{\tmem}[1]{{\em #1\/}}
\newcommand{\tmop}[1]{\ensuremath{\operatorname{#1}}}
\newcommand{\R}{\mathbb{R}}
\newenvironment{tmparmod}[3]{\begin{list}{}{\setlength{\topsep}{0pt}\setlength{\leftmargin}{#1}\setlength{\rightmargin}{#2}\setlength{\parindent}{#3}\setlength{\listparindent}{\parindent}\setlength{\itemindent}{\parindent}\setlength{\parsep}{\parskip}} \item[]}{\end{list}}
\newcounter{tmcounter}
\newtheorem{definition}{Definition}
\newcommand{\nonconverted}[1]{\mbox{}}
\newtheorem{proposition}{Proposition}
{\theoremstyle{remark}\newtheorem{remark}{Remark}}
\newtheorem{theorem}{Theorem}
\begin{document}

\title[Noncommutative Model Selection]{Noncommutative Model Selection for data clustering and dimension reduction using Relative von Neumann Entropy}

\author{Araceli Guzm{\'a}n-Trist{\'a}n}

\author{Antonio Rieser$^\ast$}
\address{Centro de Investigación en Matemáticas, A.C., Calle Jalisco S/N, Colonia Valenciana, Guanajuato C.P. 36023, Guanajuato, México}
\email{araceli.guzman@cimat.mx,antonio.rieser@cimat.mx}
\date{}
\thanks{$^\ast$Corresponding author}
\thanks{Araceli Guzmán-Tristán was supported by the CONAHCYT program "Estancias Posdoctorales por México para la Formación y Consolidación de las y los Investigadores por México". Antonio Rieser was supported by the US National
	Science Foundation under grants No. DMS-1929284 and DMS-1928930, the first while in residence at the Institute for Computational and Experimental Research in Mathematics in Providence, RI, during the "Math + Neuroscience: Strengthening the Interplay Between Theory and Mathematics" program, and the second while in residence
	at the Simons-Laufer Mathematical Sciences Research Institute in the spring of 2024 and in a program supported by the Mathematical Sciences Research Institute in the summer of 2022,
	held in partnership with the the Universidad
	Nacional Aut{\'o}noma de M{\'e}xico. Antonio Rieser was also supported by the
	CONAHCYT Investigadoras y Investigadores por M{\'e}xico Project \#1076 and by
	the grant N62909-19-1-2134 from the US Office of Naval Research Global and the
	Southern Office of Aerospace Research and Development of the US Air Force
	Office of Scientific Research.}
\begin{abstract}
  We propose a pair of completely data-driven algorithms for unsupervised
  classification and dimension reduction, and we empirically study their performance on a number of data sets, both simulated data in three-dimensions and images from the COIL-20 data set. The algorithms take as input a set of points sampled from a
  uniform distribution supported on a metric space, the latter embedded in an ambient metric space, and they output a
  clustering or reduction of dimension of the data. They work by constructing a natural family
  of graphs from the data and selecting the graph which maximizes the relative
  von Neumann entropy of certain normalized heat operators constructed from
  the graphs. Once the appropriate graph is selected, the eigenvectors of the graph Laplacian may be used to reduce the dimension of the data, and clusters in the data may be
  identified with the kernel of the associated graph Laplacian. Notably, these
  algorithms do not require information about the size of a neighborhood or
  the desired number of clusters as input, in contrast to popular algorithms
  such as $k$-means, and even more modern spectral methods such as Laplacian eigenmaps, among others. 
  
  In our computational experiments, our clustering algorithm outperforms $k$-means clustering on data sets with non-trivial geometry and topology, in particular data whose clusters are not concentrated around a specific point, and our dimension reduction algorithm is shown to work well in several simple examples.
\end{abstract}

{\maketitle}

\section{Introduction}

Unsupervised clustering and dimension reduction are two of the most important and difficult problems in
modern data analysis, as well as one of the most ubiquitous, appearing in
nearly every area of data science, including image processing, bioinformatics,
and natural language processing, among others. Most popular unsupervised clustering algorithms, including classical methods such as $k$-means
(\cite{Hastie_et_al_2009}, Section 14.3.6), in addition to many
more recently proposed techniques \cites{Belkin_Niyogi_2003,Shen_et_al_2014,Chazal_Guibas_2013,Abdullahi_etal_2022,MR1981019, doi:10.1126/science.290.5500.2319,cite-key,Coifman_Lafon_2006,Abraham_Robbin_1967,Singer_Wu_2012}, depend on additional data which must be chosen by the 
user in order to run, and which keeps these methods from being completely data-driven. In this paper, we propose new spectral clustering and dimension reduction algorithms in which their free parameters may be chosen by considering the noncommutative information-theoretic aspects of these problems.

Following {\cite{Rieser_FODS_2021}}, we interpret the clustering problem to be the problem of assigning each point in a data set $S$ to the closest connected component of the support $X$ of a probability distribution $\mu$ from
which the data was sampled (perhaps with additional noise). Our algorithm
works by constructing a family of weighted graphs $G_r$ for each $r > 0$, where the
vertices of $G_r$ are the data points and the points are connected if they are
within a distance $r$ of one another, with weights given by the ambient distance between two points. If one
considers the heat semigroup on each graph as a heat flow on the vertices,
then the resulting steady state of the flow on $G_r$ is constant on the
connected components of each $G_r$. We also adopt the model-selection heuristic used in \cite{Rieser_FODS_2021}: that the optimal graph in the family of graphs
should be the one for which the diffusion process generated by the graph Laplacian is relatively local initially,
but relatively global at the steady state. We measure this by calculating the
relative von Neumann entropy between the (normalized) heat operator at $t  =
1$ and the (normalized) heat operator at a second time $t' \gg 1$, the latter of which we
interpret as an estimate of limit of the relative von Neumann entropy between
the two operators as $t' \rightarrow \infty$. Choosing the graph where this
relative von Neumann entropy is maximal then produces a graph which is
balanced between vertices being very connected at short distances but not
particularly connected at long distances. Once the graph is chosen, the number
of connected components of the graph is the dimension of the kernel of its
Laplacian matrix, and each point can then be algorithmically assigned to its
connected component as in \cite{Rieser_FODS_2021} by analyzing the null space of the graph
Laplacian. Alternately, one may use the chosen graph to reduce the dimension of the data set by using the eigenvectors of the graph Laplacian to construct a map from the data set $S$ to $\R^k$ for some small integer $k$ as is typically done in Laplacian Eigenmaps \cite{Belkin_Niyogi_2003} and Diffusion Maps \cite{Coifman_Lafon_2006}. In our experiments, the algorithm improved on the
average relative entropy method in {\cite{Rieser_FODS_2021}} when applied to graphs whose edges were defined using Euclidean balls around each point, and outperformed $k$-means on several examples using simulated data as well as the unprocessed images from the COIL-20 image database.

\subsection{Contributions and Related Work}

A difficult problem inherent to many spectral approaches to dimension reduction and clustering, 
including local linear embedding \cite{doi:10.1126/science.290.5500.2323}, Laplacian eigenmaps 
\cite{Belkin_Niyogi_2003}, Hessian eigenmaps \cite{MR1981019}, isomap \cite{doi:10.1126/science.290.5500.2319}, local 
tangent space alignment \cite{cite-key}, diffusion maps \cite{Coifman_Lafon_2006}, and 
vector diffusion maps \cite{Singer_Wu_2012}, is how to systematically choose the free parameter necessary to run the algorithms. Indeed, most current practitioners simply choose these parameters by hand on an ad hoc basis after some trial and error. To the best of our knowledge, there are two 
earlier works which have proposed methods for resolving this issue. In \cite{Rieser_FODS_2021}, the 
second author of the present article proposed two new methods for automatic 
clustering, each of which consisted of a heuristic for selecting a graph from 
a family of graphs by examining the action of the heat semigroup on a basis 
for the functions from the data set to $\R$, and which then analyzed the 
kernel of the relevant graph Laplacian to identify the connected components of 
the chosen graph. In \cite{Shan_Daubechies_2022-arXiv}, a heuristic based on 
the 
semigroup property of the operators constructed in diffusion maps was 
developed for selecting the free 
parameter in that method. We remark that these algorithms are not 
interchangeable. The semigroup technique from 
\cite{Shan_Daubechies_2022-arXiv} is not expected to work in the setting of the current 
article (or that in \cite{Rieser_FODS_2021}), because the free parameter $r>0$ 
in \cite{Rieser_FODS_2021} and the present article is no longer a priori coupled to the semigroup parameter. Conversely, we do not expect that either the Average Local Volume Method or the Average Relative Entropy Method from \cite{Rieser_FODS_2021} can be make to work with diffusion maps, since they both depend on the fact the the graphs in question are disconnected for sufficiently small radii, and all the graphs in diffusion maps are connected (and, indeed, they are even complete graphs on the vertices).

In this article, we make two main contributions to the literature on spectral 
clustering and dimension reduction. First, we introduce a new spectral method 
which uses the ambient distances between points (up to some distance $r$) for 
the weights in our graphs, instead of the similarity kernels which are more 
typically used, and we demonstrate empirically that the spectral properties of 
the Laplacians and heat semigroups of such graphs may also be used for 
clustering and dimension reduction. This simplifies the construction of the 
graphs relative to the standard constructions, and it also clarifies that 
graph approximations of a metric space can be taken to be geometric models of 
the underlying space, and, furthermore, that the diffusion processes analyzed 
in the subsequent clustering and dimension reduction techniques may be built 
on top of the geometry encoded in the edge weights of the graphs, instead of 
guessed a priori. Second, and most importantly, we propose a method for 
selecting a geometric graph model of a metric space from a one-parameter 
family of such models. We do so by maximizing the relative von Neumann entropy 
between an operator in the heat semigroup for a graphs $\{G_r\}_{r>0}$ at some 
finite time (in this case $t=1$) and an approximation of the operator 
representing the steady state of the diffusion process. This builds and 
improves on the methods introduced in \cite{Rieser_FODS_2021}, and it 
introduces genuinely noncommutative tools into the model selection process for 
data clustering and dimension reduction. We give a number of experimental 
demonstrations that the choices of edge weights and model selection techniques 
introduced here produce good results for the data clustering and dimension 
reduction problems, using both simulated and real data sets. 

\section{Graph models and model selection}
    \label{sec:Graph models}
\subsection{Graphs, Laplacians, and Heat Semigroups}\label{subsec:Graphs}

As in {\cite{Rieser_FODS_2021}}, we assume that our data set $S$ has been
sampled from a disconnected metric measure space $(X, d_X, \mu)$, possibly with noise, and we
wish to assign each point $x \in S$ to the closest connected component $X_i
\subset X$. The number of connected components of $X$ is a topological
invariant, and so one of our aims will be to incorporate topological
techniques into the solution to this problem, following the general ideas in
{\cite{Rieser_FODS_2021}}.

We begin with some basic preliminaries on graph Laplacians and the semigroups they generate. For any finite, weighted, undirected graph $G = (V, E, w)$, where $G$ contains no loops,
i.e. $(x, x) \nin E$ for all $x \in V$, and $w : E \to (0, \infty)$ is a
function which assigns a positive weight to every edge, we define the {\tmem{Laplacian
matrix of the graph $G$}}, $L_G$, by
\begin{equation}
  \label{eq:L} (L_G)_{(i, j)} = \left\{\begin{array}{ll}
    - w (x_i, x_j) & \text{if } (x_i, x_j) \in E,\\
    0 & \text{if } (x_i, x_j) \nin E \text{ and } i \neq j\\
    \sum_{(x_i, x_k) \in E} w (x_i, x_k) & \text{if } i = j.
  \end{array}\right.
\end{equation}
We also define the corresponding heat operators $e^{- tL_G}$ for $t \in [0,
\infty)$. Note that $e^{- tL_G} e^{- sL_G} = e^{- (t + s) L_G}$, so $\{e^{-
tL_G} \}_{t \in [0, \infty)}$ forms a semigroup under matrix multiplication,
which we call the {\tmem{heat semigroup of }}$G$.

To find the connected components of a graph $G$, we will appeal to the
following well-known facts (see, for instance, Lemma 1.7(iv) in
{\cite{Chung_1997}}):

\begin{theorem}
  The number of connected components of a graph $G$ is equal to the dimension
  of the kernel of $L_G$.
\end{theorem}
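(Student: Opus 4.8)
The plan is to prove the two inclusions that relate connected components of $G$ to a natural basis of $\ker L_G$. First I would observe that $L_G$ is symmetric and positive semidefinite: for any $f : V \to \R$, a direct computation using \eqref{eq:L} gives the Dirichlet-form identity $\langle f, L_G f \rangle = \sum_{(x_i,x_j) \in E} w(x_i,x_j)\,(f(x_i) - f(x_j))^2$, where the sum is over unordered edges. Since all weights $w(x_i,x_j)$ are strictly positive, this shows $\langle f, L_G f\rangle \geq 0$, and moreover $L_G f = 0$ if and only if $\langle f, L_G f \rangle = 0$ (using positive semidefiniteness and symmetry), which happens precisely when $f(x_i) = f(x_j)$ for every edge $(x_i,x_j) \in E$.

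Next I would translate this edgewise condition into a statement about connected components. Let $C_1, \dots, C_k$ be the connected components of $G$, and let $\mathbf{1}_{C_1}, \dots, \mathbf{1}_{C_k}$ be their indicator functions in $\R^{V}$. The condition "$f$ is constant along every edge" is equivalent, by a straightforward induction along paths, to "$f$ is constant on each connected component", i.e. $f \in \operatorname{span}\{\mathbf{1}_{C_1}, \dots, \mathbf{1}_{C_k}\}$. Thus $\ker L_G = \operatorname{span}\{\mathbf{1}_{C_1}, \dots, \mathbf{1}_{C_k}\}$. Since the $C_j$ are disjoint and nonempty, the indicator functions are linearly independent, so $\dim \ker L_G = k$, which is the number of connected components.

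I do not anticipate a genuine obstacle here, as this is a classical fact; the only point requiring mild care is the passage from "constant on each edge" to "constant on each connected component", which is where the combinatorial structure of the graph actually enters, and which is handled by the observation that any two vertices in the same component are joined by a finite path along which $f$ cannot change. One could alternatively phrase the whole argument spectrally — diagonalize the symmetric matrix $L_G$ and count zero eigenvalues — but the Dirichlet-form route is cleaner and makes the role of the positivity of the weights explicit. Since the excerpt cites Lemma 1.7(iv) of \cite{Chung_1997}, it would also be legitimate to simply appeal to that reference, but I would include the short self-contained argument above for completeness.
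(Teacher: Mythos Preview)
Your argument is correct and is the standard Dirichlet-form proof of this classical fact. The paper itself does not give a proof at all: it simply records the statement as a well-known fact and points to Lemma~1.7(iv) of \cite{Chung_1997}, so there is nothing to compare against beyond noting that your self-contained argument is exactly the kind of proof that reference contains.
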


\begin{theorem}
  Each eigenfunction $f \in \ker L_G$ is constant on each connected component
  of $G$.
\end{theorem}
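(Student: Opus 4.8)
The plan is to reduce the statement to the nonnegativity of the Dirichlet energy form attached to $L_G$. First I would verify, directly from the definition \eqref{eq:L}, the identity
\[
  \langle f, L_G f\rangle \;=\; \tfrac{1}{2}\sum_{(x_i,x_j)\in E} w(x_i,x_j)\,\bigl(f(x_i)-f(x_j)\bigr)^2
\]
for every $f\colon V\to\R$, where $\langle\cdot,\cdot\rangle$ is the standard inner product on $\R^{V}$ (the factor $\tfrac12$, or its absence, depending on whether $E$ is read as a set of ordered or of unordered pairs). This is a short computation: from \eqref{eq:L} one gets $(L_G f)(x_i)=\sum_{(x_i,x_j)\in E} w(x_i,x_j)\bigl(f(x_i)-f(x_j)\bigr)$, and pairing with $f$ and symmetrizing in $i,j$ using the symmetry of $w$ yields the sum of squares. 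The only real bookkeeping obstacle is keeping the edge-counting convention for an undirected graph consistent throughout; it affects neither the sign nor the conclusion.

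Next, suppose $f\in\ker L_G$. Then $L_G f=0$, so in particular $\langle f, L_G f\rangle = 0$, and by the identity above the sum $\sum_{(x_i,x_j)\in E} w(x_i,x_j)\bigl(f(x_i)-f(x_j)\bigr)^2$ vanishes. Every summand is nonnegative and $w$ is strictly positive by hypothesis, so every summand is zero; hence $f(x_i)=f(x_j)$ whenever $(x_i,x_j)\in E$.

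Finally I would propagate this equality along paths. If $x$ and $y$ lie in the same connected component of $G$, pick a path $x=v_0,v_1,\dots,v_m=y$ with each $(v_{k-1},v_k)\in E$; applying the previous step edge by edge yields $f(x)=f(v_0)=f(v_1)=\cdots=f(v_m)=f(y)$. Thus $f$ is constant on each connected component, which is the claim.

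An alternative, which leans on the preceding theorem, is to exhibit a basis of $\ker L_G$ made of indicator functions: for each connected component $X_i$ of $G$ one checks immediately from \eqref{eq:L} that $L_G\,\mathbf{1}_{X_i}=0$ (each row sum of $L_G$ over a component is zero), these indicators are linearly independent, and by the preceding theorem their number equals $\dim\ker L_G$, so they span $\ker L_G$; any $f\in\ker L_G$ is then a linear combination of them, hence constant on each $X_i$. I expect the Dirichlet-form argument to be the cleaner of the two, with the edge-counting convention noted above being the only point requiring care.
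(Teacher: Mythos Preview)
Your argument is correct: the Dirichlet-energy identity for the unnormalized Laplacian in \eqref{eq:L} is exactly as you wrote it, and from $\langle f, L_G f\rangle=0$ together with strict positivity of $w$ you do get $f$ constant along every edge, hence along every path, hence on every component. The alternative via indicator functions is also fine, though it presupposes the preceding theorem rather than being self-contained.

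There is nothing to compare against in the paper: this theorem is stated there as a well-known fact with a reference to Chung's monograph and is not given a proof. Your Dirichlet-form argument is precisely the standard one that reference would supply.
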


Note that the operators $L_G$ and $e^{-tL_G}$ act on the space of functions $\{f:V_G \to \R\}$. The above
theorems imply that the clustering problem can be solved if we can find a basis for $\ker L_G$ such that
each basis function is non-zero and constant on one of the clusters and zero otherwise.

For the dimension reduction problem, we wish to embed the data set $S$ in to $\R^k$ in a way which preserves the local structure of $S$ as much as possible. The spectral methods for doing so all are loosely based on the following classical theorem by Bérard, Besson, and Gallot \cite{Berard_Besson_Gallot_1994} on embeddings of a Riemannian manifold into $\ell^2$. We begin with a preliminary definition.

\begin{definition}
    Let $M$ be an $n$-dimensional closed manifold and let $a\coloneqq \{\phi_j\}_{j\geq 0}$ be an orthonormal basis of the Laplacian of $M$. Define the family of maps $\psi^a_t: M \to \ell^2$, $t>0$ by
    \[ \psi^a_t(x) \coloneqq \sqrt{2}(4\pi)^{n/4}t^{(n+2)/4}\left
    \{e^{-\lambda_jt/2}\phi^a_j(x)\right\}_{j\geq 1} \]
\end{definition}
\begin{theorem}
    Let $(M,g)$ be a closed Riemannian manifold and let $a = \{\phi^a_j\}_{j\geq0}$ be an orthonormal basis of its Laplacian. Let $g_{E}$ denote the usual Euclidean scalar product on $\ell^2$. then
    \begin{enumerate}
        \item For all positive $t$, the map $\psi^a_t$ is an embedding of $M$ into $\ell^2$.
        \item the pull-back metric $(\psi^a_t)^{\ast}g_{can}$ is asymptotic to the metric $g$ of $M$ when $t \to 0_+$.
    \end{enumerate}\end{theorem}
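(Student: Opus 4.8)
The plan is to follow the original argument of Bérard, Besson, and Gallot, which rests on two pillars: the existence and asymptotics of the heat kernel on a closed Riemannian manifold, and the fact that the heat kernel separates points and tangent vectors. I would begin by recalling the spectral decomposition of the heat kernel, $K_t(x,y) = \sum_{j\geq 0} e^{-\lambda_j t}\phi^a_j(x)\phi^a_j(y)$, which converges in $C^\infty(M\times M)$ for each $t>0$. With this in hand, one observes that the Euclidean inner product $\langle \psi^a_t(x),\psi^a_t(y)\rangle_{\ell^2}$ is, up to the normalizing constant $2(4\pi)^{n/2}t^{(n+2)/2}$, exactly $K_t(x,y) - 1/\mathrm{vol}(M)$ (the subtracted term being the $j=0$ contribution, since $\phi^a_0$ is constant). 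This identity is the engine of the whole proof: geometric questions about $\psi^a_t$ become questions about $K_t$.

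For part (1), I would show $\psi^a_t$ is an injective immersion; since $M$ is compact this gives an embedding. Injectivity follows because the heat kernel is strictly positive and, for $x\neq y$, one has $K_t(x,x)+K_t(y,y) > 2K_t(x,y)$ — a strict inequality one can extract from the reproducing property $K_t(x,y)=\int_M K_{t/2}(x,z)K_{t/2}(z,y)\,dz$ together with Cauchy–Schwarz, with equality ruled out because $K_{t/2}(x,\cdot)$ and $K_{t/2}(y,\cdot)$ are not proportional. Translating back, this says $\|\psi^a_t(x)-\psi^a_t(y)\|^2>0$. For the immersion property, differentiate the inner-product identity: $d_x(\psi^a_t)$ having nontrivial kernel would force a null tangent vector $v$ with $v\!\cdot\! v$ applied to $K_t(x,y)|_{y=x}$ vanishing in a way incompatible with the short-time Gaussian behavior of the heat kernel, or more cleanly, one uses that the functions $\{\phi^a_j\}$ together with their differentials separate tangent vectors (no point of $M$ is critical for the whole collection). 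Either route is standard; the positivity and real-analyticity-in-$t$ of $K_t$ do the work.

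For part (2), I would compute the pullback metric $(\psi^a_t)^*g_E$ in terms of derivatives of $K_t$ and then invoke the Minakshisundaram–Pleijel short-time asymptotic expansion $K_t(x,y)\sim (4\pi t)^{-n/2}e^{-d(x,y)^2/4t}(u_0(x,y)+u_1(x,y)t+\cdots)$ on the diagonal and near it. Differentiating twice in the appropriate directions and keeping track of the powers of $t$, the leading term of $\partial^2 K_t$ near the diagonal contributes exactly a multiple of $g$, and the chosen normalization $\sqrt{2}(4\pi)^{n/4}t^{(n+2)/4}$ is precisely what cancels the $t$-dependent prefactor so that $(\psi^a_t)^*g_E = g + O(t)$ as $t\to 0_+$.

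The main obstacle, and the part requiring the most care, is the uniform control of the heat-kernel asymptotics and their derivatives: one needs the expansion to hold not merely pointwise on the diagonal but in a neighborhood of it, uniformly in $x$, and one must justify differentiating the asymptotic expansion term by term. This is where compactness of $M$ is essential — it yields the uniform estimates on the remainder in the parametrix construction. Once that analytic input is granted, both the embedding property and the metric asymptotics are short deductions from the inner-product identity above.
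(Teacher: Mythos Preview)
The paper does not actually prove this theorem; it states it as the classical result of B\'erard, Besson, and Gallot and cites \cite{Berard_Besson_Gallot_1994} for the proof. Your proposal correctly reconstructs the main line of that original argument---expressing $\langle \psi^a_t(x),\psi^a_t(y)\rangle_{\ell^2}$ in terms of the heat kernel, using Cauchy--Schwarz on the reproducing identity for injectivity, and invoking the Minakshisundaram--Pleijel short-time expansion for the metric asymptotics---so there is nothing to compare against in the present paper beyond the bare citation.
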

For the purposes of dimension reduction, this theorem implies that, for some fixed but sufficiently small $t$, the map $S\to \R^k$ given by
\[ 
x \mapsto \{\phi^a_t(x)\}_{1 \leq j\leq k}
\]
may be seen as an approximation of the map $\psi^a_t$, up to a multiplicative constant which depends on the intrinsic dimension of the manifold, and it therefore approximately preserves the local geometry encoded in the Riemannian metric (again, up to some constant coefficients).

\subsection{The Clustering and Dimension Reduction Problems} 

Our approach to both the data
clustering and dimension reduction problems center around choosing a weighted
graph which best estimates the intrinsic geometry of the data from a family of possibile graphs. 

We now define the family of weighted graphs we will consider. Let $S \subset (X,d_X)$ be a finite subset of a metric space $(X,d_X)$ which itself has been embedded in another metric space $(Z,d_Z)$. We further suppose that, for every $x \in X$, 
\begin{equation*}
    \lim_{X \ni x' \to x} \frac{d_Z(x,x')}{d_X(x,x')} =1.
\end{equation*}
This guarantees that $d_Z(x,x')$ and $d_X(x,x')$ are close for $d_X(x,x')$ (or $d_Z(x,x')$) sufficiently small.
For each real number $r > 0$, let $G_r = (V_r, E_r,w)$ be a weighted graph, where $V_r$, $E_r$, and $w:E_r \to \R$ are defined by
\begin{align*}
  V_r & =  S,\\
  E_r & =  \{ (x, x') \in V_r \times V_r \mid d (x, x') \leq r \}\\
  w(x,x') & =  d_{Z}(x,x')
\end{align*}
That is, the vertices of $G_r$ are the points in the data set, and two
vertices are connected by an edge if they lie in a ball of radius $r$ centered
at one of them, and the weight of each edge is the ambient distance in $Z$ between the vertices. Note that these weights are quite different from those used in in Laplacian Eigenmaps \cite{Belkin_Niyogi_2003} or Diffusion Maps \cite{Coifman_Lafon_2006}, where the weights use a heat kernel $K_r:\R^n \times \R^n \to \R$ defined on the ambient space $\R^n$, which is meant to provide a local estimate of a diffusion kernel on $X$ evaluated at the vertices of each edge. Also the graphs $G_r$ are not fully connected in general, unlike in Laplacian eigenmaps and diffusion maps. No edge is added between any pair of vertices $x,x'$ with $d_{Z}(x,x')> r$. In fact, the graphs $G_r$ will be completely disconnected for sufficiently small $r>0$.

Once we have the graphs $G_r$, to solve the clustering
problem, we must choose a scale ${\hat{r}} > 0$ so that the connected
components of the graph $G_{\hat{r}}$ at this scale best approximate the connected components of $X$. 
For the dimension reduction problem, on the other hand, we would like to choose a scale $\hat{r}>0$ and an embedding $\Phi:S \to \R^k$, $k<n$, of the form
\[
\Phi(x) \coloneqq (\hat{\phi}_0(x),\hat{\phi}_1(x),\dots,\hat{\phi}_k(x)),
\]
where the $\hat{\phi}_i$ are the eigenfunctions of $L_{G_{\hat{r}}}$, so that the local geometry of $(X,d_X)$ restricted to $S$ is as well-preserved as possible in the image of $S \subset \R^k$.

\subsection{Relative von Neumann Entropy}

Quantum information theory has provided many new tools and insights with which
to study linear operators and operator algebras, motivated by the need to
provide a solid theoretical foundation for quantum computation and quantum
communication. While the computational setting of the present work is
unequivocally classical, we are nonetheless confronted with families of
noncommutative Hermitian operators, a setting in which many of the
constructions of quantum information theory are quite natural. In the algorithms and
experiments which follow, we will see that these quantum constructions not
only apply in this context, but they also reveal important information about
the collection of graphs $\{ G_r \}_{r > 0}$ and they are essential to our model selection algorithms. In this section, we collect the basic definitions and results from quantum information theory that we will require.

\begin{definition}
  Let $H$ be a Hilbert space. A {\tmem{positive operator}} $A$ on $H$ is
  defined to be an operator such that for any vector $v \in H$, the inner
  product $\langle v, Av \rangle_H$ is a real, non-negative number. If, in
  addition, $\langle v, Av \rangle_H > 0$ for all $v \neq 0$, then we say that
  the operator $A$ is {\tmem{positive definite}}.
\end{definition}

\begin{remark}
  We recall that any positive operator has non-negative eigenvalues, and any
  positive definite operator has strictly positive eigenvalues. See \cite{Conway_1990}
  for this and other properties of positive operators on Hilbert spaces.
\end{remark}

Following physics terminology, we define a density operator as follows.

\begin{definition}
  A positive operator $\rho$ is called a {\tmem{density operator}} iff
  $\text{Tr} (\rho) = 1$.
\end{definition}

We now define the relative von Neumann entropy, also known as the relative quantum entropy.

\begin{definition}
  Let the {\emph{support}} of an operator $\rho$ on a Hilbert space $H$ be the
  set
  \[ \text{supp } \rho \assign \{v \in H \mid \rho (v) \neq 0\} . \]
  Suppose that $\rho$ is a density operator, and let $\sigma$ be a positive operator. We define the {\emph{relative von Neumann entropy}} or 
  \emph{relative quantum entropy}
  $H (\rho | | \sigma)$ of $\rho$ and $\sigma$ by
  \[ H (\rho | | \sigma) \assign \left\{\begin{array}{ll}
       \text{Tr} (\rho \log (\rho) - \rho \log (\sigma)) & \text{if supp }
       \rho \subseteq \text{supp } \sigma\\
       + \infty & \text{Otherwise.}
     \end{array}\right. \]
\end{definition}

Although the relative von Neumann entropy is neither symmetric nor satisfies the
triangle inequality, it nonetheless provides a useful way to compare positive
operators with trace between $0$ and $1$. In particular, we have

\begin{proposition}{\cite{Wilde_2013}, Theorem 11.8.2}
If $\rho$ is a density operator and $0 \leq
\text{Tr} (\sigma) \leq 1$, then $H (\rho | | \sigma)$ is non-negative. In addition, under
these conditions, $H
(\rho | | \sigma) = 0$ iff $\sigma = \rho$. 
\end{proposition}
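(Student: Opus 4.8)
The plan is to reduce this operator statement to the elementary classical fact that the Kullback--Leibler divergence of a probability vector against a sub-probability vector is nonnegative, by diagonalizing $\rho$ and $\sigma$ against each other. First I would dispose of the trivial case: if $\operatorname{supp}\rho \not\subseteq \operatorname{supp}\sigma$ then $H(\rho||\sigma) = +\infty$ by definition, so the inequality is immediate and equality cannot hold; assume henceforth $\operatorname{supp}\rho \subseteq \operatorname{supp}\sigma$. Because $\rho\log\sigma$ is, by convention, computed on $\operatorname{supp}\sigma$ and $\rho$ is supported there, all operators may be restricted to the Hilbert space $\operatorname{supp}\sigma$, on which $\sigma$ is now positive definite --- this is precisely the role of the support hypothesis, and it removes the only source of $-\infty$ terms. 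Writing the spectral decompositions $\rho = \sum_i p_i\,|e_i\rangle\langle e_i|$ and $\sigma = \sum_j q_j\,|f_j\rangle\langle f_j|$ with $\{e_i\}$ and $\{f_j\}$ orthonormal bases of $\operatorname{supp}\sigma$, $p_i \geq 0$, $q_j > 0$, and setting $P_{ij} \coloneqq |\langle e_i|f_j\rangle|^2$, a direct computation of the trace gives
\[ H(\rho||\sigma) = \sum_i p_i\log p_i - \sum_i p_i \sum_j P_{ij}\log q_j , \]
with the usual convention $0\log 0 = 0$.

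Next, the matrix $(P_{ij})$ is doubly stochastic, since $\sum_j P_{ij} = \|e_i\|^2 = 1$ and $\sum_i P_{ij} = \|f_j\|^2 = 1$ by Parseval. Hence each $r_i \coloneqq \sum_j P_{ij}q_j$ is strictly positive, and concavity of the logarithm together with Jensen's inequality gives $\sum_j P_{ij}\log q_j \leq \log r_i$; multiplying by $p_i$ and summing,
\[ H(\rho||\sigma) \;\geq\; \sum_i p_i\log\frac{p_i}{r_i} . \]
The right-hand side is the classical relative entropy of the probability vector $(p_i)$, of total mass $\operatorname{Tr}\rho = 1$, against the positive vector $(r_i)$, whose total mass is $\sum_i r_i = \sum_{i,j}P_{ij}q_j = \sum_j q_j = \operatorname{Tr}\sigma \leq 1$. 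The log-sum inequality then yields
\[ \sum_i p_i\log\frac{p_i}{r_i} \;\geq\; \Bigl(\sum_i p_i\Bigr)\log\frac{\sum_i p_i}{\sum_i r_i} = \log\frac{1}{\operatorname{Tr}\sigma} \geq 0 , \]
which proves nonnegativity.

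For the equality clause, suppose $H(\rho||\sigma) = 0$, so that every inequality in the chain above is an equality. Equality in the log-sum step forces $p_i / r_i$ to be independent of $i$; then $\log(1/\operatorname{Tr}\sigma) = 0$ forces $\operatorname{Tr}\sigma = 1$, and comparing total masses gives $p_i = r_i$ for every $i$. Equality in Jensen's inequality, for each $i$ with $p_i > 0$, forces $q_j$ to be constant over $\{\,j : P_{ij} > 0\,\}$, with common value equal to $r_i = p_i$. Using that $(P_{ij})$ is doubly stochastic, one checks that this ``block'' structure of the overlaps matches the eigenspace decomposition of $\rho$, so that $\rho$ and $\sigma$ share eigenvalues and eigenprojections, i.e.\ $\sigma = \rho$. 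The converse is immediate: if $\sigma = \rho$ then $H(\rho||\rho) = \operatorname{Tr}(\rho\log\rho - \rho\log\rho) = 0$.

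The routine part is the inequality, which is just two applications of convexity. The delicate point is the equality analysis: one has to keep careful track of which eigenvalues vanish so that the support hypothesis is used correctly, and then argue that the combinatorics of the doubly stochastic overlap matrix, together with the ``$q_j$ constant on each row's support'' condition, genuinely pins down the joint eigenstructure even when $\rho$ or $\sigma$ has repeated eigenvalues. A slicker but less elementary alternative, which I would keep in reserve, is to invoke Klein's inequality for the strictly convex function $f(t) = t\log t$: it gives $\operatorname{Tr}\bigl(f(\rho) - f(\sigma) - (\rho-\sigma)f'(\sigma)\bigr) \geq 0$, that is $H(\rho||\sigma) \geq \operatorname{Tr}\rho - \operatorname{Tr}\sigma = 1 - \operatorname{Tr}\sigma \geq 0$, in a single line, with the equality case of Klein's inequality delivering $\rho = \sigma$ at once; the spectral computation above is in effect a self-contained proof of that instance of Klein's inequality.
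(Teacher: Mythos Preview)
Your argument is correct and is essentially the standard textbook proof; note, however, that the paper does not supply its own proof of this proposition at all --- it simply cites it as Theorem~11.8.2 of Wilde's \emph{Quantum Information Theory}. The spectral-decomposition reduction you carry out (doubly stochastic overlap matrix, Jensen on $\log$, then the classical log-sum inequality) is precisely the argument given in that reference, so there is nothing to compare against beyond the citation.

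One small remark on your equality analysis: after restricting to $\operatorname{supp}\sigma$ you have $r_i = \sum_j P_{ij}q_j > 0$ for every $i$, so if some $p_i = 0$ (i.e.\ $\operatorname{supp}\rho \subsetneq \operatorname{supp}\sigma$) the log-sum equality condition $p_i = r_i$ already fails and $H(\rho||\sigma) > 0$ outright; the block-structure argument is only needed once all $p_i > 0$. You flag this as the delicate point, and your reserve route via Klein's inequality for $f(t) = t\log t$ is indeed the cleanest way to handle the equality case without that combinatorial bookkeeping.
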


In some special cases, the relative von Neumann entropy reduces to the
relative Shannon entropy of the eigenvalues of the operators, viewed as
distributions on a finite space. We will use following case in our algorithms.

\begin{proposition}
  Let $\rho$ be a density operator on a finite dimensional vector space $V$.
  If $\sigma$ is a positive semi-definite operator such that supp $\rho
  \subseteq \text{supp } \sigma$ and $\rho$ and $\sigma$ are simultaneously
  diagonalizable, then
  \[ H (\rho || \sigma) = \sum_i \lambda^{\rho}_i \log \lambda^{\rho}_i -
     \lambda^{\rho}_i \log \lambda^{\sigma}_i, \]
  where the $\lambda^{\rho}_i$ are the eigenvalues of $\rho$ and the
  $\lambda^{\sigma}_i$ are the eigenvalues of $\sigma$.
\end{proposition}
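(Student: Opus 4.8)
The plan is to reduce the statement to a diagonal computation using the common eigenbasis provided by the simultaneous diagonalizability hypothesis, and then to evaluate the trace as a sum of eigenvalues. First I would fix a basis $\{e_i\}_{i=1}^{\dim V}$ of $V$ in which both $\rho$ and $\sigma$ are diagonal, say $\rho e_i = \lambda^\rho_i e_i$ and $\sigma e_i = \lambda^\sigma_i e_i$; such a basis exists by the simultaneous diagonalizability hypothesis, and since $\rho$ and $\sigma$ are positive the eigenvalues $\lambda^\rho_i,\lambda^\sigma_i$ are non-negative reals, with $\sum_i \lambda^\rho_i = \operatorname{Tr}(\rho) = 1$. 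Next I would recall that the operators $\log\rho$ and $\log\sigma$ appearing in the definition of $H(\rho||\sigma)$ are the ones produced by the functional calculus on the respective supports: in the chosen basis, $\log\rho$ acts by $e_i \mapsto (\log\lambda^\rho_i)e_i$ on those $i$ with $\lambda^\rho_i>0$ and by $e_i \mapsto 0$ otherwise, and similarly for $\log\sigma$. Throughout I will use the standard conventions $0\log 0 = 0$ and $0\cdot\log 0 = 0$.

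The main step is to check that the support hypothesis makes $\rho\log\sigma$ a genuine finite operator and to compute $\rho\log\rho - \rho\log\sigma$ on the basis. In the common eigenbasis, the condition $\operatorname{supp}\rho \subseteq \operatorname{supp}\sigma$ is precisely the implication $\lambda^\rho_i > 0 \implies \lambda^\sigma_i > 0$ for every $i$. Consequently, for each $i$, either $\lambda^\sigma_i > 0$, in which case $(\rho\log\sigma)e_i = \lambda^\rho_i(\log\lambda^\sigma_i)e_i$, or $\lambda^\sigma_i = 0$, in which case necessarily $\lambda^\rho_i = 0$ and hence $(\rho\log\sigma)e_i = 0 = \lambda^\rho_i(\log\lambda^\sigma_i)e_i$ under the convention above; in either case $(\rho\log\sigma)e_i = \lambda^\rho_i(\log\lambda^\sigma_i)e_i$. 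In the same way $(\rho\log\rho)e_i = \lambda^\rho_i(\log\lambda^\rho_i)e_i$ with $0\log 0 = 0$. Subtracting, the operator $\rho\log\rho - \rho\log\sigma$ is diagonal in $\{e_i\}$ with $i$-th diagonal entry $\lambda^\rho_i\log\lambda^\rho_i - \lambda^\rho_i\log\lambda^\sigma_i$.

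Finally I would take the trace. Since the trace of an operator on a finite-dimensional space is the sum of its diagonal entries in any basis in which it is diagonal, we obtain
\[ H(\rho||\sigma) = \operatorname{Tr}\!\big(\rho\log\rho - \rho\log\sigma\big) = \sum_i \big(\lambda^\rho_i\log\lambda^\rho_i - \lambda^\rho_i\log\lambda^\sigma_i\big), \]
which is the desired identity. The computation is otherwise routine; the only point requiring care is the bookkeeping around the support condition and the logarithmic conventions — in particular, confirming that the hypothesis $\operatorname{supp}\rho\subseteq\operatorname{supp}\sigma$ places us in the first branch of the definition of $H(\rho||\sigma)$ and guarantees that no term on the right-hand side is infinite. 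I expect this to be the only real obstacle, and it is a minor one.
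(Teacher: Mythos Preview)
Your proposal is correct and follows the same approach as the paper: diagonalize both operators in a common eigenbasis so that $\rho\log\rho - \rho\log\sigma$ is diagonal with entries $\lambda^\rho_i\log\lambda^\rho_i - \lambda^\rho_i\log\lambda^\sigma_i$, and then read off the trace as the sum of these eigenvalues. The paper's own proof is a one-line version of exactly this argument; your write-up simply fills in the details about the support condition and the $0\log 0$ convention that the paper leaves implicit.
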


\begin{proof}
   If the matrices $\rho$ and $\sigma$ are simultaneously diagonalizable, then
  the expression above is the sum of the eigenvalues of $\rho \log \rho - \rho
  \log \sigma$, which is equal to the trace.
\end{proof}

\subsection{Selecting the Model}

We now describe our model selection procedure for both the clustering and dimension reduction problems, which may be seen as a genuinely noncommutative version of the Average Maximum Relative Entropy Method in {\cite{Rieser_FODS_2021}}. Let $S$ be a collection of $n$ points in a metric space
$(X, d)$, and for each $0< r < \text{diam}(S)$, let $G_r$ be the graph
defined in Section \ref{subsec:Graphs}. Let $L_r$
denote the graph Laplacian of $G_r$ and
let $\{e^{- tL_r} \}_{t = 0}^{\infty}$ be the resulting heat semigroup. The heuristic behind our model
selection algorithm is that the operators $e^{- t L_r}$ for low values of $t$ reflect the local combinatorial and geometric structure of the graph $G_r$. That is, for low values of $t$, given a function $f:V\to V$ which takes the value $1$ at a vertex $v$ and is $0$ elsewhere, the function $e^{-t L_r}f$ is supported close to $v$. On the other hand, when $t \to \infty$
the heat semigroup converges to a steady state, erasing the local geometry,
but giving the connected components of $G_r$. We wish to choose an $r > 0$ such that the local geometry of $G_r$ reflected in $e^{-
L_r}$ (with $t = 1$) is as different as possible from the steady state
$\lim_{t \to \infty} e^{- tL_r}$. We might like to measure this difference by the limit
of the relative entropies as $t \rightarrow \infty$, however, since the trace
of the heat operators $\tmop{Tr} (e^{- t L_r})$ may be larger than $1$ a
priori, we first normalize the operators before calculating the relative entropy. Our experiments confirm that this is a useful metric.
Our selected scale $\hat{r}$ will therefore be
\begin{equation}
\label{eq:Scale}
\hat{r} \assign \text{argmax} \left( \lim_{t \to \infty} H \left(
   \frac{1}{\tmop{Tr} (e^{-L_r})} e^{- L_r} \right| 
   \left| \frac{1}{\tmop{Tr} (e^{-t L_r})} e^{- tL_r}
   \right) \right),
\end{equation}
In practice, it is sufficient to use $t \gg 1$, large enough so that the eigenvalues of $e^{-tL_r}$
are close to either $0$ or $1$. Once we have computed $\hat{r}$, we construct a map
$\Psi : V \to \mathbb{R}^{\dim \ker L_{\hat{r}}}$ as in
{\cite{Rieser_FODS_2021}} (see also Algorithm \ref{alg:Gaussian Elimination}
below) which sends the vertices in the $i$-th connected component
of $G_{\hat{r}}$ to the standard basis vector $e_i \in \mathbb{R}^{\dim \ker
L_{\hat{r}}}$. To identify the clusters, we then take the inverse image of each of the
$e_i \in \mathbb{R}^{\dim \ker L_{\hat{r}}}$.

\section{The Algorithms}
\label{sec:Algorithms}
We present three algorithms below: the clustering algorithm, a modified Gaussian elimination algorithm used in the clustering algorithm to identify the clusters from the kernel of a graph Laplacian, and the dimension reduction algorithm. In both the clustering and dimension reduction algorithms, the input is a finite collection of points $S$ in a metric spaces, and for every $r < \text{Diam}(S)$, we construct the graph $G_r$ using the Euclidean distance between points, and we construct $L_r$, $e^{-L_r}$, and $e^{-t^*L_r}$ as in Section \ref{subsec:Graphs}, where $t^*\gg 0$ is sufficiently large that all the eigenvalues of $e^{-t^*L_r}$ are either close to $0$ or close to $1$. The target scale in both cases is chosen according to Equation (\ref{eq:Scale}). In the clustering algorithm, we calculate the kernel of the Laplacian at the target scale, and then use the modified Gaussian elimination algorithm (Algorithm \ref{alg:Gaussian Elimination}) to identify the clusters. In the dimension reduction algorithm (Algorithm \ref{alg:DimRed} below), once we have identified the target scale, we use the first $k$ eigenvectors, $k<n$ of the Laplacian $L_{\hat{r}}$ to define a map 
$\Psi:S\to \R^k$ by $\Psi(x) = (\phi_1(x),\dots,\phi_k(x))$. The dimension $k$ of the dimension reduction is chosen by the user.

\begin{algorithm}[h]
\caption{Clustering Algorithm}\label{alg2}
\begin{algorithmic}[1]
  \For{$r<Diam(S)$} 
  \State{Compute $G_r, L_r, \exp (- L_r)$ and estimate
  l{\'i}m$_{t \to \infty} \exp (- tL_r)$ by $\exp (- t^{\ast} L_r)$ for some
  $t^{\ast}$ large.}
  
  \State{Compute the relative von Neumann entropy $S_r  (\rho || \sigma)$ where $\rho =
  \frac{1}{tr (\exp (- L_r))} \exp (- L_r)$ and $\sigma = \frac{1}{tr (\exp (-
  1000 L_r))} \exp (- 1000 L_r)$ }
 \EndFor
  \State{Define $\hat{r} \assign \text{argmax} S_r$.}
  \State{Compute a basis for thee kernel of $L_{\hat{r}}$, i.e. $\phi_i$ for $i \in 1, \ldots,
  k$.}
  
  \State{Using Algorithm \ref{alg:Gaussian Elimination} and the $\phi_i$, create the map $\Psi : z_m \to \Psi (z_m) = (\psi_1
  (z_m), \psi_2 (z_m), \ldots, \psi_k (z_m)) \in \mathbb{R}^k$}.
  
  \State{Compute the distances $d_i (z_m) \coloneqq || \Psi (z_m) - e_i ||$ for each
  point $z_m$ in the sample.}
  
  \State{Assign the vertex $m$ to the $i$-th cluster if $d_i (z_m) < d_j (z_m)$
  for all $j \neq i$.}
\end{algorithmic}
\end{algorithm}

The modified Gaussian elimination algorithm (Algorithm \ref{alg:Gaussian Elimination} below) takes a matrix whose rows span the kernel of a graph Laplacian - and so each row is constant on connected components of the graph - and outputs a matrix whose entries are either (very close to) 1 or (very close to) 0, and where each row is supported (up to a small error) on exactly one connected component of the graph. The clusters are then identified as the support of each row. This algorithm was also used in the clustering methods in \cite{Rieser_FODS_2021}

\begin{algorithm}[h]    
\caption{Modified Gaussian elimination $\Psi${\smallskip}}\label{alg:Gaussian Elimination}
\begin{algorithmic}[1]
      \For{$i = 1$ to $k$}\State{Reorder columns $i$ through $n$ of
      $\Psi$ so that $| \Psi_{(i, i)} |$ is the maximum of $| \Psi_{(i, j)} |$
      in row $i$.}
      
      \State{Divide row $i$ by $\Psi_{(i, i)}$}
      
      \State{Using elementary row operations, make $\Psi_{(k, i)} = 0$
      for $k \neq i$.}
      \EndFor
      \State{Redefine $\psi_i \assign \Psi_{i, \ast}$, and (abusing
      notation) using the new $\psi_i$, redefine the map $\Psi (z_m) \assign
      (\psi_1 (z_m), \psi_2 (z_m), \ldots, \psi_k (z_m))$.}
\end{algorithmic}
\end{algorithm}

\begin{algorithm}[h]
\caption{Dimension Reduction Algorithm}\label{alg:DimRed}
\begin{algorithmic}[1]
  \For{$r<Diam(S)$} 
  \State{Compute $G_r, L_r, \exp (- L_r)$ and estimate
  l{\'i}m$_{t \to \infty} \exp (- tL_r)$ by $\exp (- t^{\ast} L_r)$ for some
  $t^{\ast}$ large.}
  
  \State{Compute the relative von Neumann entropy $S_r  (\rho || \sigma)$ where $\rho =
  \frac{1}{tr (\exp (- L_r))} \exp (- L_r)$ and $\sigma = \frac{1}{tr (\exp (-
  1000 L_r))} \exp (- 1000 L_r)$ }
 \EndFor
  \State{Define $\hat{r} \assign \text{argmax} S_r$.}
  \State{Compute the eigenvalues of $L_{\hat{r}}$ and sort them in ascending order $\lambda_0=0<\lambda_1<\ldots<\lambda_n<1$}.
  \State{Discard the $0$ eigenvalue and take the corresponding eigenvectors $\Psi_1,\Psi_2,\ldots,\Psi_n$.}
  \State{Let $k < n$ be the target dimension, then the embedding map from $X$ to $\mathbb{R}^k$ is $\Psi(x)=\begin{pmatrix}
    \phi_1(x) \\
   \phi_2(x) \\
  \vdots \\
  \phi_k   (x)
  \end{pmatrix}$.}
\end{algorithmic}
\end{algorithm}

\section{Experimental Results}

\subsection{Clustering} We now present the results of the numerical experiments we ran to test the data clustering and dimension reduction algorithms, as well as comparisons to $k$-means clustering. We tested the data clustering algorithm on both synthetic data and the unprocessed COIL-20 image database \cite{COIL-20}, the latter of which consists of 72 images of each of five objects, where each object is rotated five degrees around a vertical axis between one image and the next. The dimension reduction algorithm was tested on a number of shapes in $\R^3$ and then reduced to shapes in $\R^2$ in order to facilitate the visualization of the results.
\subsection{Relative Entropy Clustering Results}
Tables \ref{table:500pts} and \ref{table:1000pts} summarize the results (the number $\beta_{0}$ of clusters) produced by the clustering algorithm on data sets of $500$ and $1000$ points sampled from three interlinked circles embedded in $\mathbb{R}^{3}$ with a small amount of Gaussian noise (standard deviation SD). Images of the samples, colored according to the results of the clustering algorithm, are shown in Figures  \ref{fig:Ruido01_cor}-\ref{fig:Ruido05_inc}.   The horizontal circle has radius $1$ and center $(0,0,0)$, and the other have radii $0.5$ and $0.4$ and centers $(0,-1,0)$ and $(0,1,0)$, respectively. For the clustering algorithm, the graphs $G_r$ were obtained the Euclidean distance between points as the edge weights and the calculation of the relative entropy used $t^{\ast} = 1000$. We ran five different trials, where the Gaussian noise had standard deviations of $0.01$, $0.02$, $0.03$, $0.04$, and $0.05$. For each standard deviation value, we repeated the experiment $150$ times. In each trial, the relative von Neumann entropy was computed for $200$ values of $r$.

\begin{table}[h]
\begin{center}

\caption{Relative von Neumann entropy Method, 500 sample points} 	\label{table:500pts}
\begin{tabular}{|c|c|c|c|c|}
\hline
\bf{$SD | \beta_{0}$} & \bf{1} & \bf{2} & \bf{3} &  \bf{$>$4} \\ \hline

\bf{0.01} & 0 & 6.667 & 93.334 & 0 \\ \hline
\bf{0.02} & 0 & 64.667 & 35.334 & 0 \\ \hline
\bf{0.03} & 2.667 & 92 & 5.334 & 0 \\ \hline
\bf{0.04} & 31.334 & 68.667 & 0 & 0 \\ \hline
\bf{0.05} & 77.334 & 22.667 & 0 & 0 \\ \hline
 
\end{tabular}
\end{center}
\caption{Each row in the table contains the results of the trial for points samples at the corresponding noise level (SD in the table). The number in each cell is the percent of the experiments for that noise level whose output ($\beta_0$) was the value at the top of the column.}
\end{table}

\begin{table}[h]

\begin{center}

\caption{Relative von Neumann entropy Method, 1000 sample points}	\label{table:1000pts} 
\begin{tabular}{|c|c|c|c|c|}
\hline
\bf{$SD | \beta_{0}$} & \bf{1} & \bf{2} & \bf{3} & \bf{$>$4} \\ \hline

\bf{0.01} & 0 & 0 & 100 & 0   \\ \hline
\bf{0.02} & 0 & 0 & 100 & 0   \\ \hline
\bf{0.03} & 0 & 1.334 & 98.667 & 0  \\ \hline
\bf{0.04} & 0 & 51.334 & 48.667 & 0  \\ \hline
\bf{0.05} & 0 & 96.667 & 3.334 & 0  \\ \hline
 
\end{tabular}
\end{center}
\caption{Each row in the table contains the results of the trial for points samples at the corresponding noise level (SD in the table). The number in each cell is the percent of the experiments for that noise level whose output ($\beta_0$) was the value at the top of the column.}
\end{table}

Comparing these results with those obtained in the model selection by average relative entropy (the ARE method) in \cite{Rieser_FODS_2021}, we see that this method represents a significant improvement. For example, in the case of 500 points and standard deviation of noise equal to $0.01$, the ARE method in \cite{Rieser_FODS_2021} returns the correct number of clusters for 64\% of the trials, compared with the 93.334\% in this method. In the case of 1000 points and standard deviation of noise equal to $0.01$, the ARE method in \cite{Rieser_FODS_2021} returns the correct number of clusters for 62.67\% of the trials, compared with the 100\% in this method.


In Figures \ref{fig:Ruido01_cor}-\ref{fig:Ruido05_inc}, we give several clustering results of the relative von Neumann entropy algorithm with different amounts of noise (as the ones in the table) and its respective entropy vs scale graph for a $500$ sample points. The first of these is an example where the algorithm correctly classifies the clusters, and the second is where the algorithm fails. These were typical results for these trials, i.e. when the algorithm reported the correct number of clusters, the resulting clustering was also correct, and when it reported an incorrect number of clusters, the algorithm typically combined two or more clusters into one. We see from the figures that, even when our method reports an incorrect number of clusters, the reported clusters are well-separated from the others. The resulting classification will still likely useful to an end user in such cases, and typically reflects the existence of a genuine gap between the clusters. Higher amounts of noise reduce the performance of the algorithm.

\begin{figure}[h!]

\centering
\begin{subfigure}[]{0.4\linewidth}
\includegraphics[width=\linewidth]{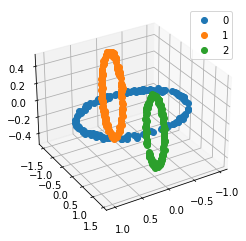}
\end{subfigure}
\begin{subfigure}[]{0.45\linewidth}
\includegraphics[width=\linewidth]{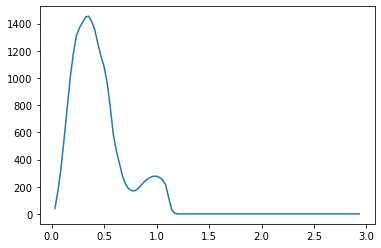}
\end{subfigure}
\caption{A typical example showing the correct classification of clusters. Noise SD=0.01. Left: Output of the algorithm. Right: Graph of relative von Neumann entropy (y-axis) vs. scale (x-axis).}	\label{fig:Ruido01_cor}
\end{figure}

\begin{figure}[h!]
\centering
\begin{subfigure}[]{0.4\linewidth}
\includegraphics[width=\linewidth]{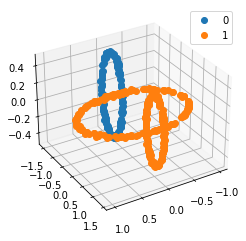}
\end{subfigure}
\begin{subfigure}[]{0.45\linewidth}
\includegraphics[width=\linewidth]{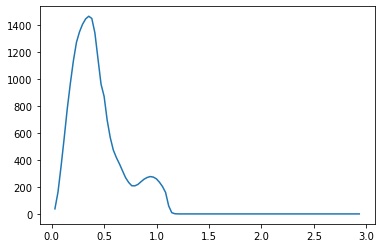}
\end{subfigure}
\caption{Incorrect classification of clusters. Noise SD=0.01. Left: Output of the algorithm. Right: Graph of Entropy (y-axis) vs. Scale (x-axis).}
\end{figure}

\begin{figure}[h!]
\centering
\begin{subfigure}[]{0.4\linewidth}
\includegraphics[width=\linewidth]{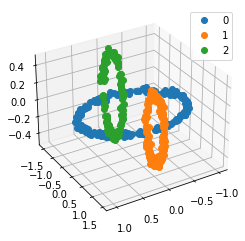}
\end{subfigure}
\begin{subfigure}[]{0.45\linewidth}
\includegraphics[width=\linewidth]{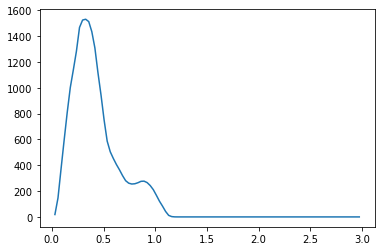}
\end{subfigure}
\caption{Correct classification of clusters. Noise SD=0.02. Left: Output of the algorithm. Right: Graph of Entropy (y-axis) vs. Scale.}
\end{figure}

\begin{figure}[h!]
\centering
\begin{subfigure}[]{0.4\linewidth}
\includegraphics[width=\linewidth]{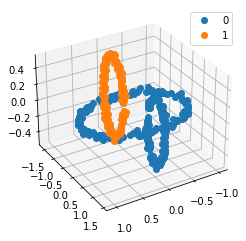}
\end{subfigure}
\begin{subfigure}[]{0.45\linewidth}
\includegraphics[width=\linewidth]{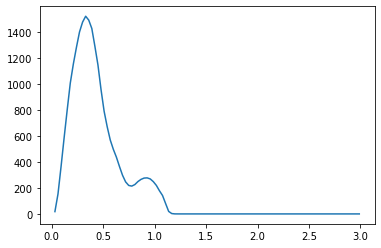}
\end{subfigure}
\caption{Incorrect classification of clusters. Noise SD=0.02. Left: Output of the algorithm. Right: Graph of Entropy (y-axis) vs. Scale (x-axis).}
\end{figure}

\begin{figure}[h!]
\centering
\begin{subfigure}[]{0.4\linewidth}
\includegraphics[width=\linewidth]{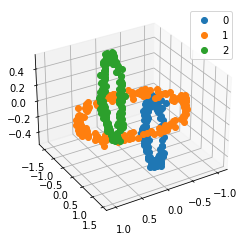}
\end{subfigure}
\begin{subfigure}[]{0.45\linewidth}
\includegraphics[width=\linewidth]{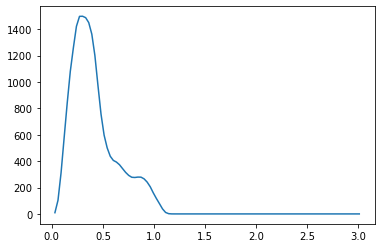}
\end{subfigure}
\caption{Correct classification of clusters. Noise SD=0.03. Left: Output of the algorithm. Right: Graph of Entropy (y-axis) vs. Scale (x-axis).}
\end{figure}

\begin{figure}[h!]
\centering
\begin{subfigure}[]{0.4\linewidth}
\includegraphics[width=\linewidth]{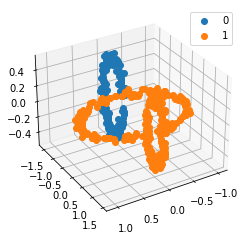}
\end{subfigure}
\begin{subfigure}[]{0.45\linewidth}
\includegraphics[width=\linewidth]{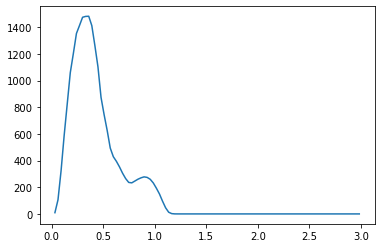}
\end{subfigure}
\caption{Incorrect classification of clusters. Noise SD=0.03. Left: Output of the algorithm. Right: Graph of Entropy (y-axis) vs. Scale (x-axis).}
\end{figure}


\begin{figure}[h!]
\centering
\begin{subfigure}[]{0.4\linewidth}
\includegraphics[width=\linewidth]{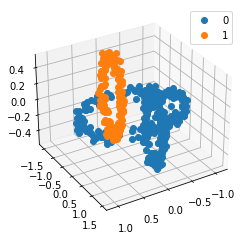}
\end{subfigure}
\begin{subfigure}[]{0.45\linewidth}
\includegraphics[width=\linewidth]{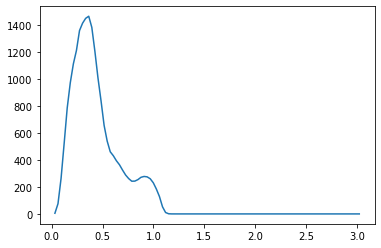}
\end{subfigure}
\caption{Incorrect classification of clusters. Noise SD=0.04. Left: Output of the algorithm. Right: Graph of Entropy (y-axis) vs. Scale (x-axis).}
\end{figure}


\begin{figure}[h!]
\centering
\begin{subfigure}[]{0.4\linewidth}
\includegraphics[width=\linewidth]{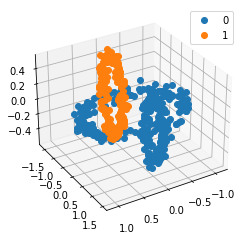}
\end{subfigure}
\begin{subfigure}[]{0.45\linewidth}
\includegraphics[width=\linewidth]{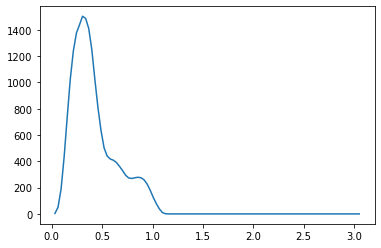}
\end{subfigure}
\caption{Incorrect classification of clusters. Noise SD=0.05. Left: Output of the algorithm. Right: Graph of Entropy (y-axis) vs. Scale (x-axis).}\label{fig:Ruido05_inc}
\end{figure}

\subsection{Comparison with other methods}
We also compared our clustering algorithm with the \emph{$k$-means} algorithm. For a 1000 points sample in the three interlinked circles we ran the three algorithms, in different amount of Gaussian noise. In addition to the  sample of points, for the $k$-means algorithm we took as input the correct number of clusters $k=3$. The results of these experiments are shown in the Figures \ref{fig:comparison_0}-\ref{fig:comparison_5}. We see from the figures that the $k$-means algorithms incorrectly identified the clusters. Nonetheless, we consider this unsurprising, as these two algorithms are known to perform poorly on data in which the clusters have interesting geometry and which are not concentrated at a point.
\begin{figure}[h!]
\centering
\begin{subfigure}[]{0.25\linewidth}
\includegraphics[width=\linewidth]{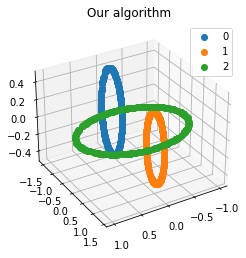}
\end{subfigure}
\begin{subfigure}[]{0.39\linewidth}
\includegraphics[width=\linewidth]{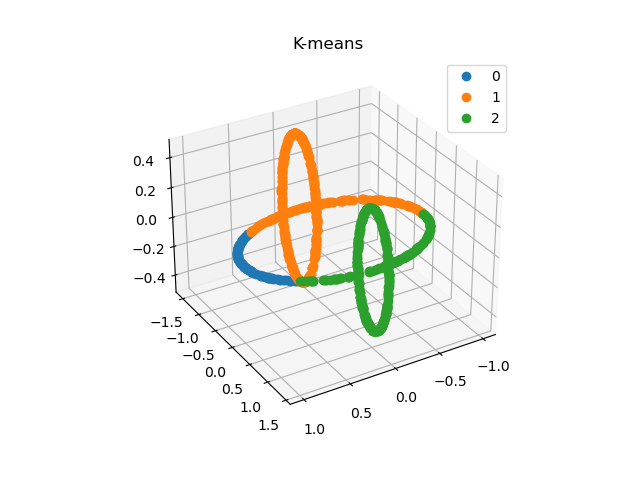}
\end{subfigure}
\caption{Comparison. No added noise. K-means: 244 mistakes}\label{fig:comparison_0}
\end{figure}

\begin{figure}[h!]
\centering
\begin{subfigure}[]{0.25\linewidth}
\includegraphics[width=\linewidth]{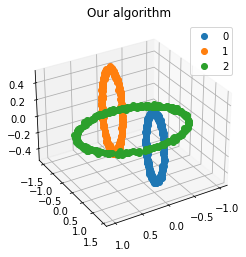}
\end{subfigure}
\begin{subfigure}[]{0.39\linewidth}
\includegraphics[width=\linewidth]{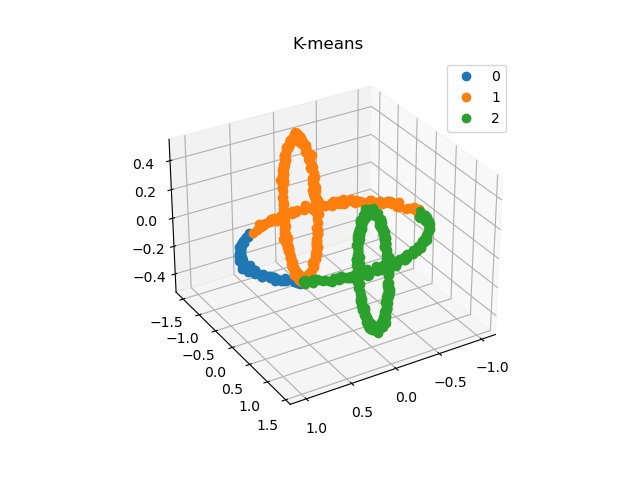}
\end{subfigure}
\caption{Comparison, SD=0.01. K-means: 233 mistakes}\label{fig:comparison_1}
\end{figure}

\begin{figure}[h!]
\centering
\begin{subfigure}[]{0.25\linewidth}
\includegraphics[width=\linewidth]{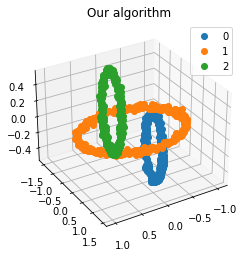}
\end{subfigure}
\begin{subfigure}[]{0.39\linewidth}
\includegraphics[width=\linewidth]{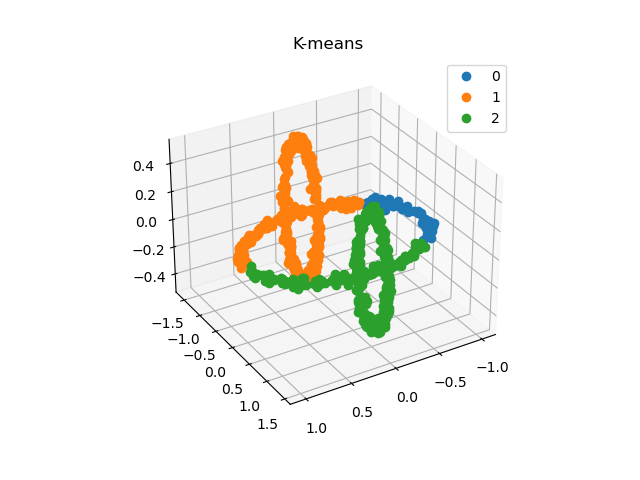}
\end{subfigure}
\caption{Comparison, SD=0.02. K-means: 255 mistakes}
\label{fig:comparison_2}
\end{figure}

\begin{figure}[h!]
\centering
\begin{subfigure}[]{0.25\linewidth}
\includegraphics[width=\linewidth]{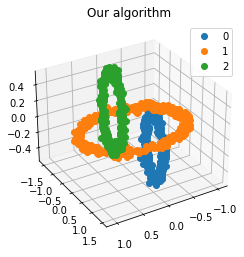}
\end{subfigure}
\begin{subfigure}[]{0.39\linewidth}
\includegraphics[width=\linewidth]{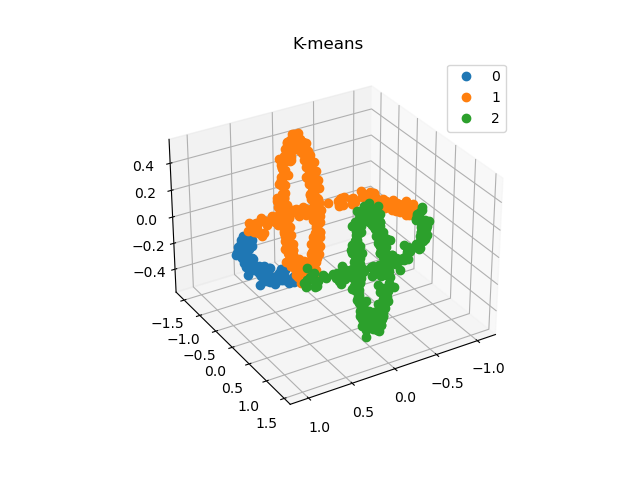}
\end{subfigure}
\caption{Comparison, SD=0.03. K-means: 237 mistakes}\label{fig:comparison_3}
\end{figure}

\begin{figure}[h!]
\centering
\begin{subfigure}[]{0.25\linewidth}
\includegraphics[width=\linewidth]{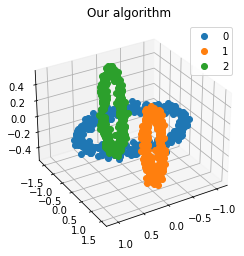}
\end{subfigure}
\begin{subfigure}[]{0.39\linewidth}
\includegraphics[width=\linewidth]{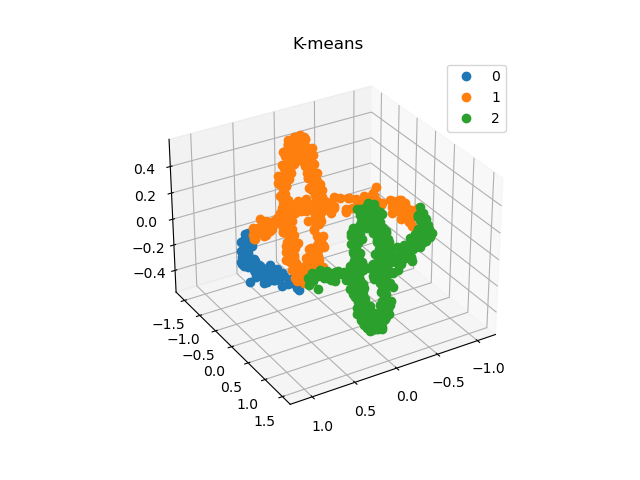}
\end{subfigure}
\caption{Comparison, SD=0.04. K-means: 247 mistakes}\label{fig:comparison_4}
\end{figure}

\begin{figure}[h!]
\centering
\begin{subfigure}[]{0.25\linewidth}
\includegraphics[width=\linewidth]{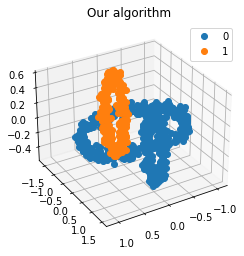}
\end{subfigure}
\begin{subfigure}[]{0.39\linewidth}
\includegraphics[width=\linewidth]{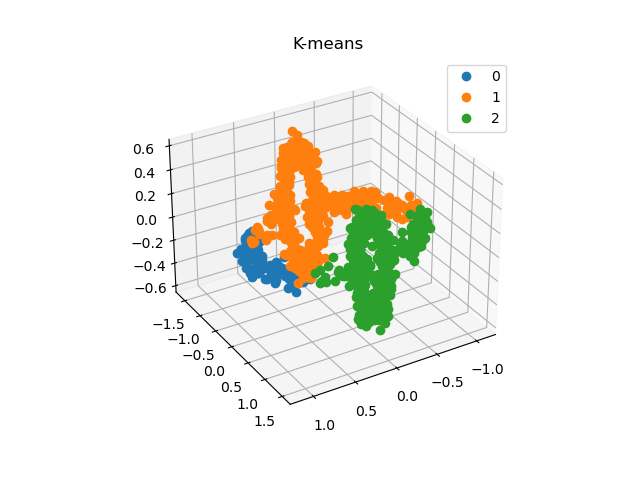}
\end{subfigure}
\caption{Comparison, SD=0.05. K-means: 248 mistakes}
\label{fig:comparison_5}
\end{figure}

\subsection{Test on Image Data}
In order to test our algorithm on image data, we used the unprocessed images from Columbia University Image Library (COIL-20) database. These images form a collection of 448×416-pixel gray scale images from 5 objects, each of which is photographed
at 72 different rotation angles \cite{COIL-20}. 

Regarding each image as a vector of pixel intensities in $\mathbb{R}^{448\times 416}$ yields a set $X \subset \R^{448\times 416}$ with 360 points; this set becomes a finite metric space when endowed with
the ambient Euclidean distance. The result of applying our clustering algorithm to this set is the correct classification of all images in five clusters. We also calculated the results of applying the $k$-means and the $k$-NN algorithms to the same set. In the first case, we gave the data $k=5$ as input and got an incorrect classification of $13$ images. In the second case, we also took $k=5$, obtaining an incorrect classification of $47$ images. The results of each of these experiments are summarized in Figures \ref{COIL}-\ref{kmeans_coil}.

\begin{figure}[h!]
\centering
\begin{subfigure}[]{0.4\linewidth}
\includegraphics[width=\linewidth]{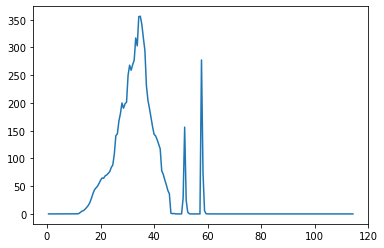}
\end{subfigure}
\begin{subfigure}[]{0.4\linewidth}
\includegraphics[width=\linewidth]{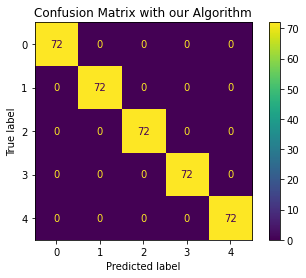}
\end{subfigure}
\caption{Performance of our algorithm for the COIL-20 unprocessed image data. Left: the Entropy graph, right: the Confusion Matrix. }
\label{COIL}
\end{figure}

\begin{figure}[h]
    \centering
    \begin{subfigure}[]{0.4\linewidth}
    \includegraphics[width=\linewidth]{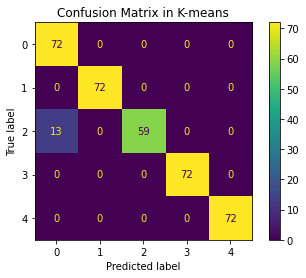}
    \end{subfigure}
    \caption{Confusion Matrix for the $k$-means algorithm applied to the COIL-20 unprocessed image data.}
    \label{kmeans_coil}
\end{figure}

\subsection{Dimension Reduction}

In order to test our dimension reduction algorithm, we tried the algorithm 
on several figures in three dimensions and reduced them to two dimensions. The results are found in Figures \ref{fig:corona}-\ref{fig:swiss_red}, where we
see that the local geometry of the circular figures was largely preserved, and for the two-dimensional surfaces, points which were close in three-dimensions mostly stayed close in two-dimensions.
\begin{figure}[h!]
\centering
\begin{subfigure}[]{0.45\linewidth}
\includegraphics[width=\linewidth]{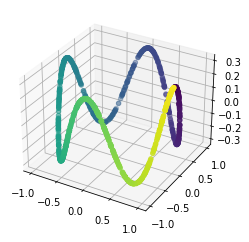}
\end{subfigure}
\begin{subfigure}[]{0.45\linewidth}
\includegraphics[width=\linewidth]{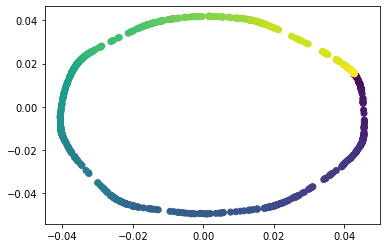}
\end{subfigure}
\caption{The corona and its two-dimensional reduction.}\label{fig:corona}
\end{figure}

\begin{figure}[h!]
\centering
\begin{subfigure}[]{0.45\linewidth}
\includegraphics[width=\linewidth]{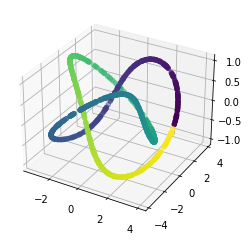}
\end{subfigure}
\begin{subfigure}[]{0.45\linewidth}
\includegraphics[width=\linewidth]{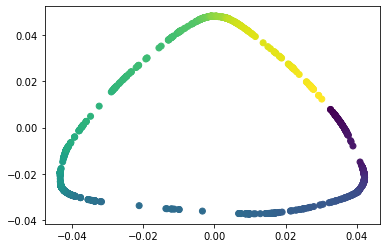}
\end{subfigure}
\caption{A trefoil knot and its two-dimensional reduction}\label{fig:trefoil}
\end{figure}

\begin{figure}[h!]
\centering
\begin{subfigure}[]{0.45\linewidth}
\includegraphics[width=\linewidth]{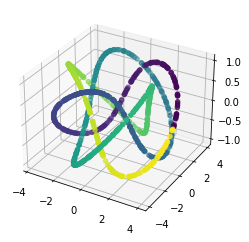}
\end{subfigure}
\begin{subfigure}[]{0.45\linewidth}
\includegraphics[width=\linewidth]{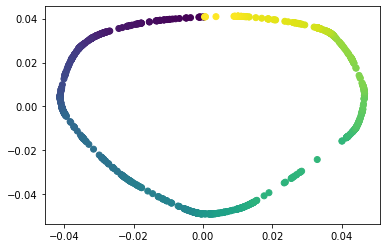}
\end{subfigure}
\caption{A torus knot and its two-dimensional reduction}\label{fig:torusKnot}
\end{figure}


\begin{figure}[h!]
\centering
\begin{subfigure}[]{0.45\linewidth}
\includegraphics[width=\linewidth]{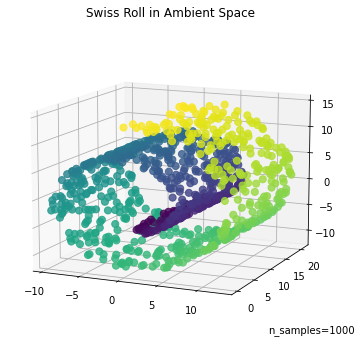}
\end{subfigure}
\begin{subfigure}[]{0.45\linewidth}
\includegraphics[width=\linewidth]{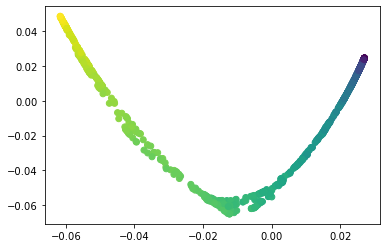}
\end{subfigure}
\caption{The Swiss roll.}\label{fig:swiss_red}
\end{figure}

\section{Discussion and future work}

In this article, we have presented new clustering and dimension reduction algorithms for a data set $S$ sampled from a
uniform distribution on a metric measure space $X$, possibly corrupted by Gaussian noise, where $X$ has been embedded in a larger metric space $Z$, and such that the metrics on $X$ and $Z$ are similar at small enough scales. The algorithms work by constructing a family of graphs indexed by the positive real numbers $r > 0$ which, roughly speaking, indicate the size of a local neighborhood around each point in the sample. We then identify an
optimal scale $\hat{r} > 0$ by maximizing the relative von Neumann entropy of
specially constructed heat operators based on the graphs. For clustering, we identify
the clusters as the components of the associated graph best approximate
the same connected components as the space $X$, and for dimension reduction, we use the eigenvectors of the Laplacian $L_{\hat{r}}$ to construct a map
$\Psi:S \to \R^k$. We have shown that the clustering algorithm represents a significant improvement over the Average Relative Entropy Method of \cite{Rieser_FODS_2021}, in addition to outperforming $k$-means clustering on the examples which we have shown here. A particularly interesting aspect of our construction is that the weights in our graphs are simply chosen to be the pairwise distance in the ambient space, in contrast to other spectral methods such as Laplacian Eigenmaps \cite{Belkin_Niyogi_2003} and Diffusion Maps \cite{Coifman_Lafon_2006}, where an ambient heat kernel is used.

There are a number of benefits to considering relative
von Neumann entropy instead of an average of classical relative entropy as in
\cite{Rieser_FODS_2021}, or even the semigroup-based heuristic of \cite{Shan_Daubechies_2022-arXiv}. In particular, von Neumann entropy is a natural noncommutative construction on the (normalized) heat operators, and we believe that this will make its rigorous theoretical treatment more tractable than the methods in either of \cite{Rieser_FODS_2021} or \cite{Shan_Daubechies_2022-arXiv}, in addition to providing motivation and a setting for studying more noncommutative techniques in statistics.

We note three issues with this method which we hope to address in future work. First, the success of the clustering algorithm presented here depends strongly on
the assumption that the sampling distribution is well-spread-out on its support - in this case, we used a uniform distribution - which unfortunately 
is not fulfilled in many interesting real-world examples. Extending this
method to non-uniform distributions, in addition to dealing with a wider
range of noise models, is an important avenue for future research. We also
note that the method proposed here is, from a certain point of view, a refinement of
single-linkage clustering (see \cite{Everitt_etal_2011}, Section 4.2), and as such, it shares many of its
shortcomings, in particular that it will produce `chaining' artifacts that may occur in single-linkage clustering. Nonetheless, we also expect that the solutions which
have been found for these issues in the case of single-linkage clustering will also work here with the appropriate modifications.

A further issue with the method which is currently unresolved is that 
computing the eigenvalues of large, dense matrices is computationally 
expensive. However, since the maxima of the relative von Neumann entropy 
appear to occur for
relatively small values of the scale parameter $r>0$, we are optimistic that
future investigation will eliminate the need to consider matrices
which are not sparse. Indeed, one may simply impose sparseness of the graphs
as a constraint of the algorithm, and then explore the effectiveness of this
modification, but we believe that theoretical results that justify restricting 
consideration to sparse graphs may also be possible to find. Given the results 
we observed in our current experiments, however, we do
not expect that restricting the algorithm to only consider sparse graphs would
significantly affect the accuracy of the algorithms, but we do expect that it
would significantly improve their speed.

Finally, the empirical success of the techniques introduced here introduces 
many interesting, difficult questions of how to establish performance 
guarantees for these methods.
\section*{Conflict of Interest Statement}

The authors certify that they have no conflict of interest in the subject 
matter or materials discussed in this article.

\printbibliography

\end{document}